\DeclareMathOperator*{\E}{\mathbb{E}}
\DeclareMathOperator*{\argmax}{argmax}
\newtheorem{property}{Property}
\newtheorem{corollary}{Corollary}
\newtheorem{proposition}{Proposition}
\title{\LARGE \bf
Optimal Sensing via Multi-armed Bandit Relaxations in Mixed Observability Domains 
}
\author{Mikko Lauri$^{}$ and Risto Ritala$^{}$
\thanks{$^{}$M. Lauri and R. Ritala are with Department of Automation Science and Engineering, Tampere University of Technology, P.O. Box 692, FI-33101 Tampere, Finland. Email: {\tt\small mikko.lauri@tut.fi, risto.ritala@tut.fi}}%
}
\begin{document}

\maketitle
\thispagestyle{empty}
\pagestyle{empty}

\begin{abstract}

Sequential decision making under uncertainty is studied in a mixed observability domain. The goal is to maximize the amount of information obtained on a partially observable stochastic process under constraints imposed by a fully observable internal state. An upper bound for the optimal value function is derived by relaxing constraints. We identify conditions under which the relaxed problem is a multi-armed bandit whose optimal policy is easily computable. The upper bound is applied to prune the search space in the original problem, and the effect on solution quality is assessed via simulation experiments. Empirical results show effective pruning of the search space in a target monitoring domain.

\end{abstract}

\section{INTRODUCTION}

Deploying autonomous agents such as robots equipped with an appropriate set of sensors allows automated execution of various information gathering tasks. The tasks can include monitoring and identification of spatio-temporal processes, automated exploration, or other data collection campaigns in environments where human presence is undesired or infeasible. Robots are mobile sensor platforms whose actions are optimized to maximize the informativeness of measurement data.

As the target state is not known, a probability density function (pdf) over the state, called a belief state, is maintained. Information conveyed by measurement data is incorporated into the belief state by Bayesian filtering. Assuming Markovian dynamics and conditional independence of measurement data given the system state, the problem is a partially observable Markov decision process, or POMDP \cite{Kaelbling1998}.

Optimal information gathering has been studied in the context of sensor management \cite{Hero2007}, and a review of applying POMDPs for sensor management is presented in \cite{Chong2009b}. The problem is formulated as a decision process under uncertainty. The goal is to find a control policy mapping belief states to actions, that when followed maximizes the expected sum of discounted rewards over a horizon of time. The reward associated with an action may depend either on the true state of the system or the belief state. The former can encode objectives such as reaching a favorable state or avoiding costly ones, useful e.g.\ for navigation and obstacle avoidance. The latter option allows information theoretic rewards, such as mutual information, applied in various sequential information gathering problems in robotics, see e.g.\ \cite{Charrow2014,Atanasov2014,Lauri2014}. Indefinite-horizon problems that terminate when a special stopping action is executed are a natural model for tasks that may be stopped once a certain level of confidence about the state is reached \cite{Hansen2007}. 


Finding optimal policies for POMDPs is computationally hard \cite{Papadimitriou1987}, and several approximate methods have been suggested. Point-based algorithms \cite{Spaan2005,Pineau2006} track so-called alpha vectors at a set of points in the belief space. The alpha vectors may then be used to approximate the optimal policy at any belief state. Online planning methods \cite{Ross2008} find an optimal action for the current belief state instead of a representation of the optimal policy. The problem is cast as a search over the tree of belief states reachable from the current belief state under various action-observation histories. Combining online methods with Monte Carlo simulations to evaluate utility of actions has lead to approximate algorithms able to handle problems with up to $10^{52}$ states \cite{Silver2010}.

In mixed observability domains a part of the state space is fully observable. The belief space is a union of low-dimensional subspaces, one for each value of a fully observable state variable. Robotic systems often exhibit mixed observability which may be exploited to derive efficient POMDP algorithms \cite{Ong2010}.

A multi-armed bandit (MAB) is a model for sequential decision-making also applied in sensor management \cite{Hero2007}. A decision-maker plays one arm of the MAB and collects a reward depending on the state of the arm. The arm then randomly transitions to a new state while other arms remain stationary. Solutions to MABs are index policies that are easier to compute than solutions to general POMDPs \cite{Hero2011}.

Most of the aforementioned research applies reward functions that only depend on the true state and action. The expectation of the reward is linear in the belief state, a feature leveraged by many of the solution algorithms. Information theoretic quantities such as entropy and mutual information that would be useful as reward functions in optimal sensing problems are nonlinear in the belief state. Classical POMDP algorithms cannot be applied to solve such problems.

In this paper, we study POMDPs with mixed observability with mutual information as the reward function. As such, our approach is especially suited for optimal sensing problems in robotics domains. We remove constraints on available actions to obtain a relaxed problem. The optimal value of the relaxed problem obtained is an upper bound on the optimal value in the POMDP. We identify the conditions under which the relaxed problem is a MAB and has an easily computable optimal solution. The upper bound is applied in an online planning algorithm to prune the search space.

The paper is organized as follows. In Section~\ref{sec:problem}, the mixed-observability POMDP is defined. In Section~\ref{sec:solve}, methods for solving the problem are discussed. In Section~\ref{sec:bounds}, two relaxations are derived that provide upper bounds for the optimal value function. Section~\ref{sec:mab} determines the conditions under which the relaxations are MABs. Empirical results are provided in Section \ref{sec:experiment}. Section~\ref{sec:conclusion} concludes the paper.

\section{A MIXED OBSERVABILITY POMDP}
\label{sec:problem}
\textit{Notation.} We denote random variables and sets by uppercase letters, and realizations of random variables and members of sets as lowercase letters. Time instants are distinguished by writing e.g.\ $x$ and $x'$ for realizations at time $t$ and $(t+1)$, respectively.

An agent, e.g.\ a robot or another sensor platform, has an internal state $x\in X$ that captures the dynamics and constraints of operating on-board sensors and other devices. The internal state evolves according to a deterministic dynamics model $D_X$, defined $x'=f_a(x)$ where $a\in A(x)$ is a control action in the finite set of actions allowed in internal state $x$.

Let $Y = \lbrace Y_1, Y_2, \ldots, Y_n\rbrace$, $X\cap Y = \emptyset$, denote a set of random inference variables an agent wishes to obtain information about. The dynamics of the variables are governed by a stochastic model $D_Y$, defined as a Markov chain $p(y'\mid y, a)$. The complete state of the system is $s\in S = X \times Y$.


The problem features mixed observability, where the internal state is fully observable and the inference variables are partially observable. The agent's observations $z'\in Z$ follow an observation model $O$, defined by $p(z'\mid y', a)$.

The agent maintains a belief state $b=\left(x,p(y)\right)\in B$, consisting of the deterministic, fully observable internal state and a pdf over $Y$. The initial belief state $b_0$ is given. Given a belief state $b$, an action $a$, and an observation $z'$, the belief state at the next time instant is given by the belief update equation $b'=\tau(b,a,z')=\left(x',p(y'\mid z',a,b)\right)$ where $x'=f_a(x)$, and the pdf over the inference variables is obtained from a Bayesian filter
\begin{equation}
p(y'\mid z',a,b) = \frac{1}{\eta}p(z'\mid y',a) p(y'\mid a, b)
\label{eq:bayesfilter}
\end{equation}
where $p(y'\mid a, b) = \sum\limits_{y\in Y}p(y'\mid y, a)p(y)$ is the predictive pdf and $\eta = p(z'\mid a, b)$ is the normalization factor denoting the prior probability of observing $z'$. Given any sequence of actions and observations, there is no uncertainty about the resulting internal state $x$. Thus we can equivalently define the set of allowed actions $A(x)$ via the belief state as $A_b$.

The agent's objective is encoded by a reward function $R$. The objective is to maximize the expected sum of discounted rewards over a horizon of $T$ decisions. The discount factor is $\gamma\in[0,1]$.

We consider belief-dependent reward functions. Let $R(b,a)=I(Y;Z\mid a)$, i.e.\ the mutual information (MI) between the posterior state and observation. MI is defined
\begin{equation}
I(Y;Z\mid a) = H(Y\mid a) - \E\limits_{Z}\left[ H(Y\mid z', a)\right],
\label{eq:mi_definition}
\end{equation}
where $H(Y\mid a)$ is the entropy of the predictive pdf $p(y'\mid a, b)$ and the second term is the expected entropy of the posterior pdf \eqref{eq:bayesfilter} under the prior pdf $p(z'\mid a, b)$.

The problem $P=\langle S,A,Z,D,O,R,b_0,\gamma \rangle$ where $D=D_X\times D_Y$ is an instance of a POMDP. By Bellman's principle of optimality \cite{Bellman1957} the solution may be found via a backward in time recursion procedure known as value iteration. An optimal value function $V_t^*:B\to\mathbb{R}$ maps a belief state to its maximum expected sum of discounted rewards when an optimal policy is followed for the next $t$ decisions. Optimal value functions are computed by
\begin{subequations}
\begin{align}
Q_t(b,a) &= R(b,a) + \gamma \sum\limits_{z\in Z}{p(z\mid a, b)V_{t-1}^*(b')\mathrm{d}z} \\
V_t^*(b) &= \max_{a\in A_b} Q_t(b,a),
\end{align}
\label{eq:vi}
\end{subequations}
starting from $Q_1(b,a)=R(b,a)$. The optimal policy $\pi_t^*:B\to A_b$ for $t$ remaining decisions is found by extracting the argument $a$ maximizing $Q_t(b,a)$. The recursion is continued up to $V_T$.

\section{SOLVING POMDPS WITH BELIEF-DEPENDENT REWARDS}
\label{sec:solve}
In most POMDPs, the reward function is state-dependent and its expectation is linear in the belief state. The finite-horizon optimal value function then has a finite representation by a convex hull of a set of hyperplanes over the belief space \cite{Smallwood1973}. Many exact \cite{Lovejoy1991} and approximate \cite{Hauskrecht2000,Pineau2006,Spaan2005} offline algorithms for POMDPs rely on this piecewise linearity and convexity of the value function. Reward functions such as mutual information and entropy that are useful in optimal sensing problems are nonlinear in the belief state. Thus, these offline algorithms are not applicable to solve the recursion \eqref{eq:vi} with a belief-dependent reward function.

Online planning methods \cite{Ross2008} find an optimal action for the current belief state instead of a closed form representation of the optimal policy. As explicit representations of policies are not required, a nonlinear belief-dependent reward function does not constitute any additional difficulty.

In online planning, a tree graph of belief states reachable from the current belief state is constructed. The current belief state is the root of the tree, and belief states computed via $\tau(b,a,z')$ are added as child nodes of node $b$. When a desired search depth is reached, the values from the leaves of the tree are propagated back to the root according to \eqref{eq:vi}.

Suboptimal actions may sometimes be pruned from the search tree by branch-and-bound pruning when the optimal value for executing action $a$ in belief state $b$, $Q_t(b,a)$, has an upper bound $U(b,a)$ and a lower bound $L(b,a)$. For a given $a$ and any $\hat{a} \neq a$, if $U(b,a)\leq L(b,\hat{a})$ then action $a$ is suboptimal at $b$ and all its successor nodes may be pruned from the tree. The bounds may similarly be propagated via \eqref{eq:vi}. The number of belief states in the search tree is reduced.

Alternatives to online tree search include e.g.\ specialized approximate methods \cite{Krishnamurthy2007}, however limited to small problems, open-loop approximation applied with the receding horizon control principle \cite{Lauri2014}, or reduced value iteration \cite{Atanasov2014} for Gaussian beliefs over $Y$ in a mixed-observability case. For a theoretical treatment of nonlinear but convex reward functions in POMDPs, we refer the reader to \cite{Araya2010}.

\section{BOUNDS FOR THE VALUE FUNCTION}
\label{sec:bounds}

The optimal policy $\pi_t^*$ attains the optimal value for all belief states. Then any other policy $\pi_t$ achieves a value that is a lower bound on the optimal value. A simple choice is to set $\pi_t$ as the greedy one-step look-ahead policy $\pi_G (b)= \argmax_{a\in A_b}R(b,a)$. Other options include random policies or blind policies \cite{Hauskrecht2000} always executing a single fixed action.


Upper bounds are found by deriving two relaxed versions of the original POMDP problem by removing constraints on the applicable actions. The set of internal states reachable from a subset $X_S \subseteq X$ in a single time step is
\begin{equation}
F(X_S) = \bigcup_{\forall x\in X_S} \bigcup_{\forall a\in A(x)} f_a(x).
\label{eq:reachable_one_step}
\end{equation}
The set of internal states reachable in $k$ steps from $X_S$ is
\begin{equation}
F^k(X_S) = \underbrace{F \circ \ldots \circ F(X_S)}_{\text{$k$ times}}.
\label{eq:reachable_k_step}
\end{equation}
The first relaxation is obtained by removing all constraints imposed by the internal state as follows.

\newtheorem*{unirel}{Universal sensor relaxation}
\begin{unirel}
Given a POMDP problem $P = \langle S,A,Z,D,O,R,b,\gamma \rangle$, its universal sensor relaxation is $P_u = \langle Y,\hat{A},Z,D_Y,O,R,p(y),\gamma \rangle$, where $\hat{A} = \bigcup_{x\in X}A(x)$ contains all actions, $D_Y$ is the stochastic part from $D$, and $b=(x,p(y))$ is replaced by $p(y)$.
\end{unirel}

When we consider only actions applicable in the internal states reachable within $k=T-t$ decisions, where $t$ is the current time step, we obtain the $k$-step sensor relaxation.

\newtheorem*{krel}{$\textbf{\textit{k}}$-step sensor relaxation}
\begin{krel}
Given a POMDP problem $P = \langle S,A,Z,D,O,R,b,\gamma \rangle$, its $k$-step sensor relaxation is $P_k = \langle Y,\hat{A}_k,Z,D_Y,O,R,p(y),\gamma \rangle$, where $\hat{A}_k = \bigcup_{i\in F^k(\lbrace x \rbrace) }A(i)$ is the set of all actions possible in the internal states reachable within $k$ time steps from the current internal state $x$, and $D_Y$ and $p(y)$ are as for $P_u$.
\end{krel}

As $A_b\subseteq\hat{A}_k\subseteq\hat{A}$, the optimal value in either relaxed problem is greater than or equal to the optimal value in the original problem. Let $V^*, V_u^*$, and $V_k^*$ denote the optimal value functions for $P$, $P_u$, and $P_k$, respectively, and let $V^{\pi_G}$ denote the value function for the greedy policy in $P$ for a given $1\leq t\leq T$. Now
\begin{equation}
V^{\pi_G}(b) \leq V^*(b) \leq V_k^*(b) \leq V_u^*(b) \quad \forall b\in B
\label{eq:main_inequality}
\end{equation}
holds for the optimal value and the bounds.

\section{MULTI-ARMED BANDIT INDEX POLICIES FOR POMDP RELAXATIONS}
\label{sec:mab}
Both relaxations defined above are POMDPs themselves. Solving even the relaxed problems may thus be a computationally intractable task. This motivates identifying POMDPs whose relaxations have easily computable optimal policies.

In a multi-armed bandit (MAB) problem, a decision-maker plays one arm of the MAB and collects a reward depending on the state of the arm. Four requirements distinguish MABs among general stochastic control problems \cite{Hero2007}: 1) exactly one machine is played by the agent per action, and the state of that machine evolves such that the agent may not affect it, 2) machines not played remain in their current state, 3) the machines are independent, and 4) the machines that are not played do not contribute any reward. Gittins \cite{Gittins1979} showed that the optimal policies in MABs are so-called greedy index allocation policies. For each arm, an allocation index known as the Gittins index is calculated with the optimal selection yielding the highest index value. Index policies are optimal when actions are not irrevocable \cite{Hero2011,Hero2007}: any action is available at any stage, and may be chosen at a later stage with the same reward, excluding the effect of the discount factor. Index policies are usually much easier to compute than backward induction solutions of POMDPs \cite{Hero2011}.



An index policy is in general not optimal for the mixed-observability POMDP of Section~\ref{sec:problem}, as actions are irrevocable due to the constraints imposed by the internal state. However, both of the relaxations $P_u$ and $P_k$ have a fixed action space. The following three properties are required for the relaxations to be MABs. Results are derived for $P_u$, and they hold for the more restricted case $P_k$ as well.

\begin{property}
Each $a\in \hat{A}$ is related to $Y_a \subset Y$, such that $Y_a\neq \emptyset$ and $i,j\in \hat{A}: i\neq j\Rightarrow Y_i \cap Y_j = \emptyset$. 
\label{prop:separation}
\end{property}

\begin{property}
Given $a\in \hat{A}$, each $y_i' \in Y_a$ is conditional on the values $u_i \in U_i \subseteq Y_a$ of some subset $U_i$ of the inference variables in $Y_a$, and $y_i \notin Y_a$ are stationary, i.e.\
\begin{equation}
p(y'\mid y, a) = \prod\limits_{y_i\in Y_a} p(y_i' \mid u_i)  \prod\limits_{y_i\in Y \backslash Y_a} \delta(y_i'-y_i),
\label{eq:dyn}
\end{equation}
where $\delta$ is the Dirac delta function.
\label{prop:dyn}
\end{property}
\begin{property}
For $a\in\hat{A}$, the observation is conditional on $y_a' \in Y_a$, i.e.\
\begin{equation}
p(z'\mid y', a) = p(z' \mid y_a').
\label{eq:obs}
\end{equation}
\label{prop:obs}
\end{property}

\begin{corollary}
When $p(y) = \prod\limits_{a\in \hat{A}} p(y_a)$, $y_a \in Y_a$, i.e.\ the prior on inference variables is independent between the subsets $Y_a$ and properties \ref{prop:dyn} and \ref{prop:obs} hold, the independence is preserved in the posterior,
\begin{equation}
p(y'\mid z', a, b) = \prod\limits_{k\in \hat{A}} p(y_k'\mid z', a, b).
\label{eq:posterior_separation}
\end{equation}
Furthermore, 
\begin{equation}
I(Y;Z\mid a) = I(Y_a; Z).
\label{eq:mi_separation}
\end{equation}
\label{cor:one}
\end{corollary}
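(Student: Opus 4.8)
The plan is to prove the posterior factorization \eqref{eq:posterior_separation} first and then read off \eqref{eq:mi_separation} from it, working throughout from the Bayes filter \eqref{eq:bayesfilter} (and, as elsewhere in the paper, reading sums over $Y$ as integrals where $Y$ is continuous). I would start from the predictive pdf $p(y'\mid a,b)=\sum_{y}p(y'\mid y,a)\,p(y)$, substituting the product prior $p(y)=\prod_{k\in\hat{A}}p(y_k)$ and the dynamics \eqref{eq:dyn}. By Property~\ref{prop:separation} the blocks $Y_k$ are pairwise disjoint (and, for the product prior to be a density over $Y$, they exhaust it), and by Property~\ref{prop:dyn} each conditioning set $U_i$ of a variable $y_i'\in Y_a$ lies inside $Y_a$; hence the summand factors over the blocks, with no coupling between $Y_a$ and any $Y_k$, $k\neq a$. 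The sum therefore splits into one factor per block: the $Y_a$-factor is a predictive marginal $p(y_a'\mid a,b)=\sum_{y_a}\prod_{y_i\in Y_a}p(y_i'\mid u_i)\,p(y_a)$, while for each $k\neq a$ the Dirac deltas collapse $\sum_{y_k}\prod_{y_i\in Y_k}\delta(y_i'-y_i)\,p(y_k)$ to $p(y_k')$. Thus $p(y'\mid a,b)=p(y_a'\mid a,b)\prod_{k\neq a}p(y_k')$.

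Next I would condition on $z'$. By Property~\ref{prop:obs} the likelihood $p(z'\mid y',a)=p(z'\mid y_a')$ multiplies only the $Y_a$-factor, and the normalizer $\eta=p(z'\mid a,b)=\sum_{y_a'}p(z'\mid y_a')\,p(y_a'\mid a,b)$ because the other blocks integrate to one; dividing through therefore leaves the $k\neq a$ factors untouched,
\[
p(y'\mid z',a,b)=\left[\frac{1}{\eta}\,p(z'\mid y_a')\,p(y_a'\mid a,b)\right]\prod_{k\neq a}p(y_k').
\]
By Bayes' rule the bracketed term is exactly the block posterior $p(y_a'\mid z',a,b)$, while marginalizing the full posterior over all blocks but $Y_k$ shows $p(y_k'\mid z',a,b)=p(y_k')$ for $k\neq a$; substituting these identifications gives \eqref{eq:posterior_separation}.

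For \eqref{eq:mi_separation} I would insert both factorizations into \eqref{eq:mi_definition}. Since the entropy of a product of independent pdfs is the sum of the block entropies, $H(Y\mid a)=H(Y_a\mid a)+\sum_{k\neq a}H(Y_k)$ and $H(Y\mid z',a)=H(Y_a\mid z',a)+\sum_{k\neq a}H(Y_k)$, and the $\sum_{k\neq a}H(Y_k)$ terms are identical in the two expressions because $p(y_k')$ does not depend on $z'$. The outer expectation in \eqref{eq:mi_definition} is taken under $p(z'\mid a,b)=\eta$, which by the previous step equals the observation marginal of the reduced problem on $Y_a$; taking that expectation of the second identity and subtracting from the first, the $\sum_{k\neq a}H(Y_k)$ contributions cancel and $I(Y;Z\mid a)=H(Y_a\mid a)-\E_Z[H(Y_a\mid z',a)]=I(Y_a;Z)$.

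I expect the only delicate point to be the block factorization of the predictive sum: it relies on the clause $U_i\subseteq Y_a$ of Property~\ref{prop:dyn} (no active transition reaches outside its own block) together with the disjointness of the blocks in Property~\ref{prop:separation}, which is what lets the Dirac-delta factors of the stationary blocks decouple cleanly from the active block. A secondary technicality is handling the delta functions rigorously when $Y$ has continuous components, but nothing in the argument changes if the sums are read as integrals throughout.
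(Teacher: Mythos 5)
Your proof is correct and follows essentially the same route as the paper's (much terser) argument: push the factored prior through the Bayes filter using Properties~\ref{prop:dyn} and \ref{prop:obs} to get \eqref{eq:posterior_separation}, then use additivity of entropy over independent blocks and the fact that $p(y_k'\mid z',a,b)=p(y_k')$ for $k\neq a$ to cancel the inactive blocks in \eqref{eq:mi_definition}. You simply fill in the details the paper leaves implicit, including the key role of $U_i\subseteq Y_a$ in decoupling the predictive step.
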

Equation \eqref{eq:posterior_separation} is seen to hold applying \eqref{eq:bayesfilter} to the given prior with models satisfying \eqref{eq:dyn} and \eqref{eq:obs}. Equation \eqref{eq:mi_separation} is seen to hold through two steps. First, due to the independence structure of the prior and posterior, $H(Y\mid a) = \sum_{k\in \hat{A}} H(Y_k\mid a)$ and similarly for $H(Y\mid z', a)$. Second, by \eqref{eq:bayesfilter} we see from \eqref{eq:posterior_separation} that for $k\neq a: p(y_k' \mid z', a, b) = p(y_k) \Rightarrow H(Y_k\mid a) = H(Y_k \mid z', a)$. Applying these steps to \eqref{eq:mi_definition} leads to \eqref{eq:mi_separation}.

We now state our main result determining the conditions under which a POMDP relaxation is a MAB.
\begin{proposition}[MAB equivalence of POMDP relaxations]
When properties \ref{prop:separation}-\ref{prop:obs} are fulfilled, and the prior is $p(y) = \prod\limits_{a\in \hat{A}} p(y_a)$, $y_a \in Y_a$, the relaxations $P_u$ and $P_k$ are multi-armed bandit problems.
\label{prop:mab}
\end{proposition}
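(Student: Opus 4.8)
The plan is to verify directly that each of the four defining requirements of a MAB holds for $P_u$; the argument for $P_k$ is identical since $P_k$ merely restricts the action set from $\hat A$ to $\hat A_k\subseteq\hat A$. First I would fix the correspondence: associate to each action $a\in\hat A$ one arm of the bandit, whose state is the marginal belief $p(y_a)$ over the block $Y_a$. Property \ref{prop:separation} guarantees the blocks are nonempty and pairwise disjoint, so this assignment is well defined and the blocks together carry all the information in $p(y)$ that is ever affected or observed. Playing arm $a$ means selecting action $a$, and the resulting belief update is $\tau$ restricted to the $Y$-component, i.e.\ the Bayesian filter \eqref{eq:bayesfilter}.

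Next I would check the requirements one at a time against the three properties and Corollary \ref{cor:one}. Requirement (1) — exactly one arm advances, and once the arm is chosen the agent cannot further steer its evolution — follows from Property \ref{prop:dyn}: conditioned on $a$, the transition kernel on $Y_a$ is the fixed product $\prod_{y_i\in Y_a}p(y_i'\mid u_i)$, and by Property \ref{prop:obs} the observation depends only on $y_a'$, so the posterior \eqref{eq:bayesfilter} keeps the update confined to block $Y_a$. Requirement (2) — unplayed arms are frozen — is exactly the stationarity clause $\prod_{y_i\in Y\setminus Y_a}\delta(y_i'-y_i)$ of \eqref{eq:dyn}, together with the marginal identity noted in Corollary \ref{cor:one}: for $k\neq a$, $p(y_k'\mid z',a,b)=p(y_k)$, so the state of every other arm is literally unchanged. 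Requirement (4) — no reward is collected from unplayed arms — is the identity $R(b,a)=I(Y;Z\mid a)=I(Y_a;Z)$ of \eqref{eq:mi_separation}, which shows the one-step reward is a function of the played arm's state alone. Finally, the fixed action set $\hat A$ (established just before the proposition) removes the irrevocability obstruction that would otherwise break the MAB structure.

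Requirement (3), independence of the arms, is where the only genuine work lies, and it must be argued inductively over the planning horizon rather than for a single step. The base case is the hypothesis $p(y)=\prod_{a\in\hat A}p(y_a)$. For the induction step I would invoke the factorization \eqref{eq:posterior_separation} of Corollary \ref{cor:one}: if the belief entering a decision epoch is a product over the blocks, then after playing any $a$ and receiving any $z'$ the updated belief is again such a product — the played block is replaced by its filtered marginal and the others are carried over unchanged — so at every reachable belief state the joint belief factorizes into independent per-arm states. The main obstacle I anticipate is making exactly this composition airtight: checking that Corollary \ref{cor:one}'s single-step factorization chains along arbitrary action–observation histories, and handling the edge case in which $\bigcup_{a\in\hat A}Y_a$ is a proper subset of $Y$ (any leftover inference variables are stationary and never observed, so they contribute only an additive constant to every reward and can be dropped without altering the bandit). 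With all four requirements verified, $P_u$ and $P_k$ are MAB problems, and by Gittins' theorem their optimal policies are index policies.
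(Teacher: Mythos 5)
Your proposal is correct and follows essentially the same route as the paper's own (sketch) proof: it identifies the arms via Property~\ref{prop:separation}, establishes the arm states as $p(y_a)$ and verifies requirements 1--2 from Properties~\ref{prop:dyn}--\ref{prop:obs}, and uses the two parts of Corollary~\ref{cor:one} for requirements 3 and 4. You simply fill in details the paper's sketch leaves implicit (the inductive propagation of the product factorization along action--observation histories, and the case $\bigcup_{a\in\hat A}Y_a\subsetneq Y$), which is a welcome elaboration rather than a different argument.
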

\begin{proof}(Sketch).
Consider the four requirements for MABs introduced above. Property \ref{prop:separation} establishes the ''arms'' of the bandit, partly satisfying requirement 1. The rest of requirements 1 and 2 are satisfied by Properties \ref{prop:dyn} and \ref{prop:obs}, which establish the states of the bandit arms as $p(y_a), y_a\in Y_a, a\in \hat{A}$. Requirement 3 is satisfied by the independence properties in the first part of Corollary~\ref{cor:one}. The latter part of the corollary shows that requirement 4 is satisfied.
\end{proof}

When the proposition holds, the optimal policies for $P_u$ and $P_k$ are greedy index policies with values $V_u^* = V_u^{\pi_G}$ and $V_k^* = V_k^{\pi_G}$, respectively. These optimal values are thus much easier to compute than for general POMDPs. 

Let us consider the following example problem. 
\newtheorem*{prdef}{Monitoring reactive targets}

\begin{prdef}
An agent is located at $x\in X=\left\lbrace 1,2,\ldots,M\right\rbrace$. At every time step the agent may either stay where it is or move to one of the neighboring locations $N(x)\subset X$. The applicable actions are $A(x)=x\cup N(x)$. Let $Y=\lbrace Y_1, Y_2, \ldots, Y_M\rbrace$, with $Y_i$ assuming value $1$ if a target is present at location $i$ and 0 if not. Each target $Y_i$ reacts to the agent's presence such that
\begin{equation}
p(y_i'\mid y_i, a) =
\begin{cases}
   w_i(y_i'\mid y_i) & \textrm{if } a = i \\
   r_i(y_i'\mid y_i) & \textrm{if } a \neq i
  \end{cases}
\end{equation}
The agent records measurements in $Z=\lbrace 0, 1\rbrace$ according to
\begin{equation}
p(z' = 0 \mid y', a) = 
\begin{cases}
   1-q_{-} & \textrm{if } y_a' = 0 \\
   q_{+}       & \textrm{if } y_a' = 1
  \end{cases},
\label{eq:example_obsmodel}
\end{equation}
where $q_{-}<0.5 ,q_{+}<0.5$ are the false negative and positive probabilities, respectively, and $p(z' = 1 \mid y', a)= 1- p(z' = 0 \mid y', a)$. The reward function is \eqref{eq:mi_definition}.
\end{prdef}
Consider the relaxations $P_u$ and $P_k$ of this problem. Property~\ref{prop:separation} is immediately seen to be satisfied. Property~\ref{prop:dyn} is satisfied if $r_i(y_i'\mid y_i) = \delta(y_i'-y_i)$ for $1\leq i \leq M$, i.e.\ if targets remain stationary when the agent is not present, while $w_i(y_i'\mid y_i)$ may be chosen freely. Property~\ref{prop:obs} is satisfied as the observation only depends on the value $y_a'$.

\section{EMPIRICAL EVALUATION}
\label{sec:experiment}
We ran simulation experiments on the monitoring problem defined above. There were $|Y|=M=36$ inference variables, arranged on a rectangular two-dimensional four-connected grid. The agent was allowed to move on this grid and sense the targets. We examined two cases. In the first case, all of the properties \ref{prop:separation}-\ref{prop:obs} were satisfied. In the second case, we relaxed Property~\ref{prop:dyn} by allowing all inference variables to change state. In all cases, the optimization horizon $T$ was varied from 1 to 6 decisions. The other parameters were $q_{-}=0.05,q_{+}=0.05, \gamma=0.95$.

We implemented the real-time belief space search (RTBSS) algorithm of \cite{Paquet2006} as presented in \cite{Ross2008}. RTBSS implements an online search of belief states reachable from the current belief state, and applies lower and upper bounds to prune suboptimal actions. We applied the greedy lower bound $V^{\pi_G}$ (Section \ref{sec:bounds}) and upper bounds $V_k^{\pi_G}$ or $V_u^{\pi_G}$ (Section \ref{sec:mab}). We compared this approach to an exhaustive search of all reachable belief states equivalent to using lower and upper bounds $(-\infty,\infty)$, and to the POMCP algorithm \cite{Silver2010}, which gives a recommendation on the next action to execute based on a series of Monte Carlo (MC) simulations.

\subsection{Case 1: Properties \ref{prop:separation}-\ref{prop:obs} satisfied}
\label{subsec:first}
We defined $r_i = \delta(y_i'-y_i)$, and $w_i$ were two-state Markov chains with parameters $p_{01}^i$, $p_{11}^i$, where $p_{jk}^i$ denotes the probability that $Y_i$ transitions from $j$ to $k$. For each $1\leq i\leq M$, we sampled uniformly at random $p_{01}^{i}\in [0.0,0.2],p_{11}^{i}\in[0.8,1.0]$. A set of 1000 initial belief states $(x_0,p(y_0))$ satisfying the independence assumption between inference variables was sampled uniformly at random. 

As Proposition~\ref{prop:mab} holds, $V_u^* = V_u^{\pi_G}$ and $V_k^* = V_k^{\pi_G}$, and the greedy MAB policies give valid upper bounds, see \eqref{eq:main_inequality}. Applying RTBSS with these bounds hence always finds the optimal solution, which was verified in our simulations. The number of visited nodes in the search tree for each of the 1000 belief states is shown in Fig.~\ref{fig:pruning} for $3 \leq T \leq 6$ and both upper bounds. Since the bound $V_k^*$ is tighter, applying it results in a lower or equal number of visited nodes than $V_u^*$. For comparison, the average number of visited nodes for the exhaustive search is shown in Table \ref{tab:exhaustive}. We note that applying either bound greatly reduces the number of visited nodes, in some cases by up to an order of magnitude. Although the reduction in the number of visited nodes is substantial, evaluating the bounds has a computational cost that must be balanced with the savings from visiting fewer nodes. This point is discussed in more detail in the next subsection.

\begin{figure}[thpb]
      \centering
      \includegraphics[scale=0.55]{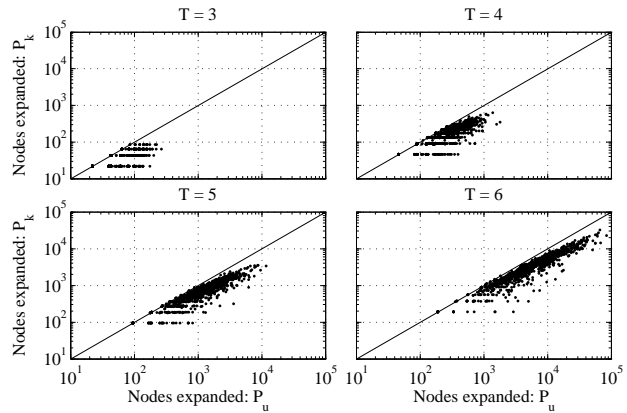}
      \caption{The number of search tree nodes expanded by RTBSS for $3\leq T\leq 6$ with the upper bound from $P_u$ ($x$-axis) or $P_k$ ($y$-axis). The diagonal line shows where the two values are equal.}
      \label{fig:pruning}
\end{figure}

\begin{table}
\vspace{0.3cm}
\begin{center}
\caption{Average number of nodes expanded by exhaustive search.}
\label{tab:exhaustive}
\begin{tabular}{@{}ccccc@{}}
\toprule
      & $T=3$ & $T=4$ & $T=5$ & $T=6$ \\ \midrule
Nodes & $4.0\cdot 10^2$  & $3.8\cdot 10^3$  & $3.6\cdot 10^4$  & $3.5\cdot 10^5$ \\ \bottomrule
\end{tabular}
\end{center}
\end{table}

POMCP recommendations coincide with the optimal action more reliably when the number of MC simulations is increased and the optimization horizon $T$ is short, see Table~\ref{tab:pomcp}. We compared the values of optimal actions to those recommended by POMCP when the two differed. The difference between the two values is the performance loss, for which we computed the mean values and worst-case maximum values. The results are shown in Table~\ref{tab:pomcp_loss}. Performance loss tends to be greater for fewer MC simulations and a greater optimization horizon $T$. As the number of MC simulations increases the mean performance loss is low, indicating that on average POMCP performs very well compared to the optimal solution. However, even if the mean performance loss is low, the worst case performance loss from following POMCP recommendations may be significantly greater. In problems where suboptimal actions may lead to unacceptable performance loss, methods such as RTBSS with valid bounds may be preferable to POMCP.
\begin{table}
\begin{center}
\caption{Percentage of POMCP recommendations agreeing with optimal.}
\label{tab:pomcp}
\begin{tabular}{@{}cccccc@{}}
\toprule
MC simulations & $T=2$ & $T=3$ & $T=4$ & $T=5$ & $T=6$ \\ \midrule
$10^1$       & 40.7    &  36.8   &  31.3   & 28.1    & 29.2    \\
$10^2$       & 57.4    &  50.1   &  45.3   & 43.2    & 40.1    \\
$10^3$       & 69.5    &  62.2   &  54.0   & 47.6    & 46.8    \\
$10^4$       & 75.0    &  74.9   &  69.1   & 63.5    & 59.1    \\ \bottomrule
\end{tabular}
\end{center}
\end{table}

\begin{table*}
\vspace{0.3cm}
\begin{center}
\caption{Performance loss of POMCP compared to optimal.}
\label{tab:pomcp_loss}
\begin{tabular}{@{}ccc|cc|cc|cc|cc@{}}
\toprule
& \multicolumn{2}{c}{$T=2$} & \multicolumn{2}{c}{$T=3$} & \multicolumn{2}{c}{$T=4$} & \multicolumn{2}{c}{$T=5$} & \multicolumn{2}{c}{$T=6$} \\ \midrule
MC simulations & Mean            & Max      & Mean      & Max      & Mean      & Max      & Mean      & Max      & Mean      & Max      \\
$10^1$      & 0.0749          & 0.3930   & 0.0948         & 0.5278         &  0.1061         & 0.5283         & 0.1084    &  0.5587        & 0.1175          & 0.6278         \\
$10^2$      & 0.0298          & 0.2296   & 0.0518         & 0.3299         &  0.0584         & 0.2953         & 0.0642    &  0.3251        & 0.0670          & 0.3968         \\
$10^3$      & 0.0122          & 0.1025   & 0.0294         & 0.1751         &  0.0402         & 0.2035         & 0.0487    &  0.3639        & 0.0518          & 0.2839         \\
$10^4$      & 0.0064          & 0.0503   & 0.0168         & 0.0893         &  0.0261         & 0.1399         & 0.0317    &  0.1816        & 0.0380          & 0.2020         \\ \bottomrule
\end{tabular}
\end{center}
\end{table*}

\subsection{Case 2: Property \ref{prop:dyn} not satisfied}
We next examined the case where Property~\ref{prop:dyn} was not satisfied. We set $r_i = w_i$ for each $i$. Each of the dynamics models was a two-state Markov chain. We considered three subcases distinguished by the rate of the state transitions: slow, medium or fast. For slow dynamics, the parameters were sampled for each $i$ uniformly at random such that $p_{01}^{i,slow}\in [0.0,0.2],p_{11}^{i,slow}\in[0.8,1.0]$, for medium dynamics $p_{01}^{i,med}\in [0.2,0.4],p_{11}^{i,med}\in[0.6,0.8]$, and for fast dynamics $p_{01}^{i,fast}\in [0.4,0.6],p_{11}^{i,fast}\in[0.4,0.6]$. Each experiment was repeated for 1000 randomly sampled initial belief states and dynamics models. All beliefs satisfied the independence assumption between inference variables.

The problem is quite similar to the one in Subsection~\ref{subsec:first}, and POMCP performance was also observed to be very good on average. The MAB equivalence, Proposition~\ref{prop:mab}, is now not satisfied for the relaxed problems. Thus, the upper bounds are approximate, and optimality for RTBSS cannot be guaranteed. We examined the effect that this had on solutions provided by RTBSS. The results are summarized in Table \ref{tab:optimality}. The table shows the percentage of solutions equal to the optimal solution in case of slow, medium or fast dynamics for either the universal sensor upper bound $V_u^{\pi_G}$ from $P_u$ or the $k$-step sensor upper bound $V_k^{\pi_G}$ from $P_k$. 

\begin{table}
\vspace{0.3cm}
\begin{center}
\caption{Percentage of RTBSS solutions agreeing with optimal solution when Property~\ref{prop:dyn} was not satisfied.}
\label{tab:optimality}
\begin{tabular}{@{}ccccccc@{}}
\toprule
Dynamics                    & Bound & $T=2$   & $T=3$    & $T=4$    & $T=5$    & $T=6$    \\ \midrule
\multirow{2}{*}{\textit{Slow}}   & $P_u$   & 100\% & 100\%  & 100\%  & 100\%  & 100\%  \\ 
                        & $P_k$      & 100\% & 99.9\% & 99.9\% & 99.6\% & 99.6\% \\ \midrule
\multirow{2}{*}{\textit{Medium}} & $P_u$   & 100\% & 100\%  & 99.9\% & 100\%  & 100\%  \\ 
                        & $P_k$      & 100\% & 99.8\% & 99.7\% & 99.1\% & 98.8\% \\ \midrule
\multirow{2}{*}{\textit{Fast}}   & $P_u$   & 100\% & 100\%  & 100\%  & 100\%  & 100\%  \\ 
                        & $P_k$      & 100\% & 99.9\% & 99.7\% & 99.0\% & 98.9\% \\ \bottomrule
\end{tabular}
\end{center}
\end{table}

Optimal solutions are found in the majority of cases, with the percentage decreasing as the optimization horizon is greater and the rate of dynamics faster. Since often $V_k^{\pi_G}(b) < V_u^{\pi_G}(b)$, it is more likely that the bound obtained from the universal sensor relaxation does not overestimate the optimal value, and consequently better agreement with the optimal solution is observed. The results suggest that it may still be reasonable to approximate upper bounds for $V^*$ by the value of greedy policies in the relaxed problems $P_u$ or $P_k$, even if their optimality cannot be guaranteed.

Efficiency of pruning the search tree was not affected significantly compared to the case of the previous subsection. Applying either bound dramatically reduced the number of  visited nodes in the search tree. We examined the mean time required to find a solution for a belief state either by exhaustive search or branch-and-bound pruning. A representative comparison is presented in Fig.~\ref{fig:time} for the case of medium dynamics. For $T<3$, exhaustive search performs fastest: the computational burden of computing the bounds outweighs the savings from visiting fewer nodes during the search. The advantages of pruning the search tree become apparent for $T\geq 4$. At best, applying pruning is an order or magnitude faster than exhaustive search. For $T\geq 3$, the upper bound from $P_k$ is fastest. Using the upper bound from $P_u$ is faster than exhaustive search for $T\geq 5$. Comparing computation times between POMCP and RTBSS were not meaningful, as the experiments were run on different computer platforms with different implementations of e.g.\ the search trees.

\begin{figure}[thb]
      \centering
      \includegraphics[scale=0.55]{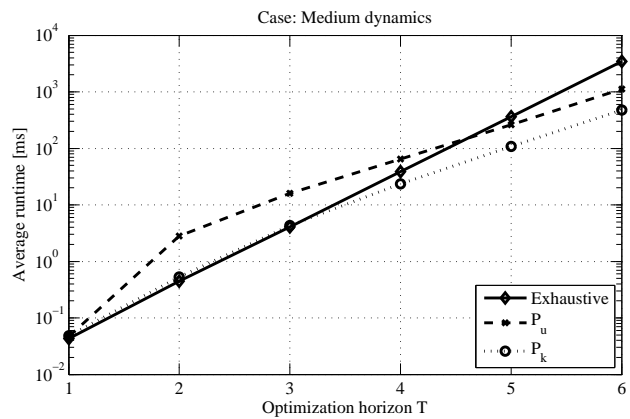}
      \caption{The mean runtime per decision in milliseconds as a function of optimization horizon $T$ for the exhaustive and branch-and-bound search applying upper bounds from $P_u$ or $P_k$.}
      \label{fig:time}
\end{figure}

\section{CONCLUSIONS}
\label{sec:conclusion}
An optimal sensing problem in a mixed-observability domain where an internal state is fully observable and a set of inference variables are partially observable was formulated as a POMDP. The objective was the sequential maximization of mutual information of the inference variables and observations. Upper bounds for the optimal value function were found by relaxing constraints of the original problem.

When three conditions are fulfilled, the relaxed problems are MABs. First, each action is related to a unique subset of inference variables. Secondly, only inference variables in the subset corresponding to the current action evolve, while the other inference variables remain stationary. Finally, observations depend only on the inference variables in the subset related to the current action. The optimal solution of a MAB problem is a greedy index allocation policy, which is much easier to find than solving a general POMDP.

The POMDP was solved by a branch-and-bound search. The effectiveness of the bounds for pruning the search space was empirically verified in a target monitoring problem. Finding an optimal action by requires searching a fraction of the reachable belief states compared to an exhaustive search. Computation time is at best an order of magnitude smaller when applying pruning. The computational savings become apparent when savings due to reduced search space size exceed the additional cost of computing the bounds.


Future work includes studying applicability of our methodology in a wider range of mixed observability domains. Motivated by positive results on optimality of greedy policies for the restless bandit problem \cite{Ahmad2009}, we believe there may exist more classes of stochastic control problems than currently known where a greedy policy is optimal. Identifying such classes would further expand the applicability of our results.

\addtolength{\textheight}{-12cm}   









\bibliographystyle{IEEEtran}
\bibliography{lauri_ritala_ICRA15}

\end{document}